\newfont{\titelfont}{cmr10 scaled 1728}
\newfont{\titelbffont}{cmbx10 scaled 2074}
\newfont{\titelbigfont}{cmr10 scaled 2488}
\newenvironment{parawithheader}[1]{
\begin{trivlist}
\item[\hskip \labelsep {\bfseries [#1 :]}
\rmfamily ]}{\end{trivlist}}
\def\squareforqed{\hbox{\rlap{$\sqcap$}$\sqcup$}}
\def\qed{\ifmmode\squareforqed\else{\unskip\nobreak\hfil
\penalty50\hskip1em\null\nobreak\hfil\squareforqed
\parfillskip=0pt\finalhyphendemerits=0\endgraf}\fi}
\title{
Detecting Ontological Conflicts in Protocols between 
\\Semantic Web Services
}
\author{Priyankar Ghosh and Pallab Dasgupta}
\institute{
Department of Computer Science and Engineering,\\
Indian Institute of Technology Kharagpur, India\\
\email{\{priyankar, pallab\}@cse.iitkgp.ernet.in}
}
\date{}
\begin{document}

\maketitle
\thispagestyle{fancyFirstPage}
\begin{abstract}
The task of verifying the compatibility between interacting web services has
traditionally been limited to checking the compatibility of the interaction
protocol in terms of message sequences and the type of data being exchanged.
Since web services are developed largely in an uncoordinated way, different
services often use independently developed ontologies for the same domain
instead of adhering to a single ontology as standard. In this work we
investigate the approaches that can be taken by the server to verify the
possibility to reach a state with semantically inconsistent results during
the execution of a protocol with a client, if the client ontology is published.
Often database is used to store the actual data along with the ontologies
instead of storing the actual data as a part of the ontology description. It is
important to observe that at the current state of the database the semantic
conflict state may not be reached even if the verification done by the server
indicates the possibility of reaching a conflict state. A relational algebra
based decision procedure is also developed to incorporate the current state of
the client and the server databases in the overall verification procedure.
\end{abstract}

\section{Introduction}\label{IntroSec}
Ontology is regarded as a formal specification of a (usually hierarchical)
set of concepts and the relations between them. The need for developing
intelligent web services that can automatically interact with other web
services has been one of the primary forces behind recent research towards
standardization of ontologies of specific domains of interest
\cite{DBLP:conf/IEEEcit/GuoCL05,DBLP:conf/sac/NoiaSDM03,OWL,WSML,DublinCore}.
For example, if several online book stores follow the same ontology for the {\em
book} domain, then it facilitates an intelligent web service to automatically
search these book stores to find books in a particular category.

In the context of next generation of web, it is envisaged that intelligent
agents will find, combine, and act upon information on the web, thereby
perform the routine day-to-day jobs independently. The protocols that will be
used by such intelligent agents to communicate with the semantic web services,
will play an extremely important role towards materializing the next generation
of web. The protocol may contain branches which are decisions made on the basis
of the previous information exchange. Along with defining the information
exchange between the client and server in the form of a set  query-answer,
independent actions will be described as a part of the protocol. The action may
be automatically executed or may need manual intervention for completion, but
the information required to initiate the action is provided by answer of the
previous queries. We present an example of such protocol in
Section~\ref{ProtoConSec}.

When two communicating web services use ontologies, with respect to semantic
conflict the following scenarios are possible.

\begin{description}
 \item[Scenario-1 :] 
If the web services choose to use the same ontology, there will be no semantic
conflict. In this paper we observe that the requirement that the ontologies used
by communicating web services must match is a very strong requirement which is
often not needed in practice. 

 \item[Scenario-2 :]
If two communicating web services use different ontologies, then they may
potentially reach a state where there is a semantic conflict/mismatch arising
out of the differences between their ontologies. For example, suppose the
ontologies of web service $A$ and web service $B$ recognize the class  $vehicle$
and its sub-classes, namely, \emph{car}, \emph{truck} and \emph{bike}. The
ontology of $A$ defines \emph{color} as an attribute of class 
\emph{vehicle}, where as the ontology of $B$ defines \emph{color} as an 
attribute of the sub-classes \emph{car} and \emph{bike} only. Now suppose
$A$ wants to follow the following protocol with $B$:
\begin{enumerate}
\item[Step-1:] Ask $B$ for the registration number of a vehicle which
is owned by a given person.
\item[Step-2:] If $B$ finds the registration number, then ask $B$ for
the color of the vehicle.
\end{enumerate}
Several executions of this protocol are possible for different valuations of
the data exchanged by the protocol. Semantic conflicts arising out of the
differences in ontologies may occur in some of these cases, but not always.
For example:
\begin{itemize}

\item If $B$ does not find the registration number, then Step-2 is not
        executed and there is no semantic conflict.

\item If $B$ finds the registration number and the vehicle happens to
        be a truck, then Step-2 of the protocol will lead to a semantic
        conflict, since in $B$'s ontology, the {\em color} attribute is
        not defined for trucks.

\item If $B$ finds the registration number and the vehicle happens to
        be a car or a bike, then Step-2 will not lead to a semantic
        conflict, since in $B$'s ontology, the {\em color} attribute is
        defined for cars and bikes.

\end{itemize}
If the ontology of $A$ and the protocol is made available to $B$, then $B$
can formally verify whether any execution of the protocol may lead to a
semantic conflict and warn $A$ accordingly before the actual execution of
the protocol begins. 

There has been considerable research in the recent past on matching ontologies
and finding out semantic conflicts/mismatches among two ontologies
\cite{Visser_analysis,DBLP:journals/jods/CastanoFM06,
DBLP:journals/kbs/HameedSP02}.  In many cases, two web services may have
conflicting ontologies, but the protocol between them may avoid the conflict
scenarios. Consider the scenario where the direction of query-answer is
reversed, that is the same sequence of queries are made by $A$ and answered by
$B$. Also $A$ makes the query about the color of vehicle only if the vehicle is
not a truck. In this case the conflict will not be sensitized by the protocol.
In other words, two agents may not agree on all concepts in their universe, but
may still be able to support certain protocols as long as they avoid the
contentious issues -- a fact which is often ignored in world politics! Therefore
an approach which rules out communication between two services on the grounds
that their ontologies do not match is too conservative in practice. Since the
standardization of ontologies and their acceptance in industrial practice seems
to be a distant possibility, we believe that the verification problem presented
in this paper and its solution is very relevant at present.

 \item[Scenario-3 : ] 
The ontologies can be visualized as a combination of meta-data and a set of
instances. Classes, relations and data-types form the meta-data part of the
ontology, whereas the individuals and the valuations of the attributes are the 
actual data. It is often the case that the actual data is stored in a database,
and ontologies are used as a wrapper on top of the databases. Therefore the
state of the database has to be incorporated, while the server checks whether
the protocol can possibly reach conflict state. Since the protocol between the
client and the server typically have branches and the decision for making the
next query is dependent on the answer of the current query, the conflict that is
present at the ontology level may not be sensitized due to the the answers
generated from the back-end database. We present a relation algebra based
decision procedure to check whether the conflict, that are present in the
ontology level, are actually present with respect to the current state of the
back-end database.

 \item[Scenario-4 : ]
It is important to observe that the protocol has different runs depending on
the instantiation of the variables that are used in the protocol. Since the
conflict may not be sensitized in a particular run of the protocol, the server
may choose to start the protocol and check the possibility to get into a
conflict after every information exchange. Depending on how the conversation
progresses the server may either continue to run protocol, or may terminate the
conversation when it finds that the conflict is inevitable.

\end{description}
A preliminary version of this work is published in~\cite{priyankar-west10}. In
that version we presented the verification algorithm for Scenario-2. In this
paper we include the algorithms for Scenario-3, i.e. the verification of the
spuriousness of an ontological conflict with respect to the current state of the
back-end database. We also show that the same algorithm can be used by the
server for Scenario-4. The paper is organized as follows. The syntax for
describing a protocol is described in Section~\ref{ProtoConSec}. In
Section~\ref{GraphModelSec} we present a graph based model for representing the
ontologies. The proposed formal method for detecting semantic conflicts at the
ontology level is presented in Section~\ref{OntoMethodSec}. The notion of
ontology with database and query answering with the back-end database and the
algorithm to verify the conflicts at the ontology level in the presence of the
database are presented in Section~\ref{OntoDatabaseSec}. Related works are
briefly discussed in Section~\ref{RelatedWorksSec}. Finally we present the
conclusion in Section~\ref{ConclusionSec}.

\section{Protocol and Conflict}\label{ProtoConSec}
In this section we present a formalism similar to SQL for the specification of
the protocol. It may be noted that other formalisms can also be used to specify
a protocol as long as the formalism has expressive power similar to the
formalism used in this paper. We present two example protocols and also
describe the notion of the conflict that we have addressed in this paper.

\subsection{Formal Description of the Protocol}
Typically, a protocol consists of a sequence of queries and answers. The query
specifies a set of variables through \emph{``Get''} keyword and specifies a set
of classes using \emph{``from''} keyword. The valuations corresponding to the
variable set are generated from those classes. Also an optional \emph{``where''}
keyword is used to specify the conditions on the variables. The answer of a
query is a tuple of valuations corresponding to the variable set specified in
the query. The branching is specified using \emph{``if-else''} statements. 

\subsection{Example of Protocol}
\begin{figure}[htb]
\psfrag{C}{Client}
\psfrag{S}{Server}
\psfrag{q11}{$Get (title:\; t1, author:\; a, date:\; d1)$}
\psfrag{q12}{$from$ $Manual$}
\psfrag{q13}{where $t1 = `ManualName'$}
\psfrag{a1}{$\langle t1, a, d1 \rangle$}
\psfrag{q21}{$Get (title:\; t2, author:\; a)$}
\psfrag{q22}{$from$ $Book$}
\psfrag{q23}{}
\psfrag{a2}{$\langle t2, a \rangle$}
\psfrag{q31}{$if$ $(t_2 != null)$}
\psfrag{q32}{$Get (title:\; t3, author:\; a, date:\;d2)$}
\psfrag{q33}{$from$ $Book.Proceedings$}
\psfrag{a3}{$\langle t3, a, d2 \rangle$}
\centering
\includegraphics[scale=0.48]{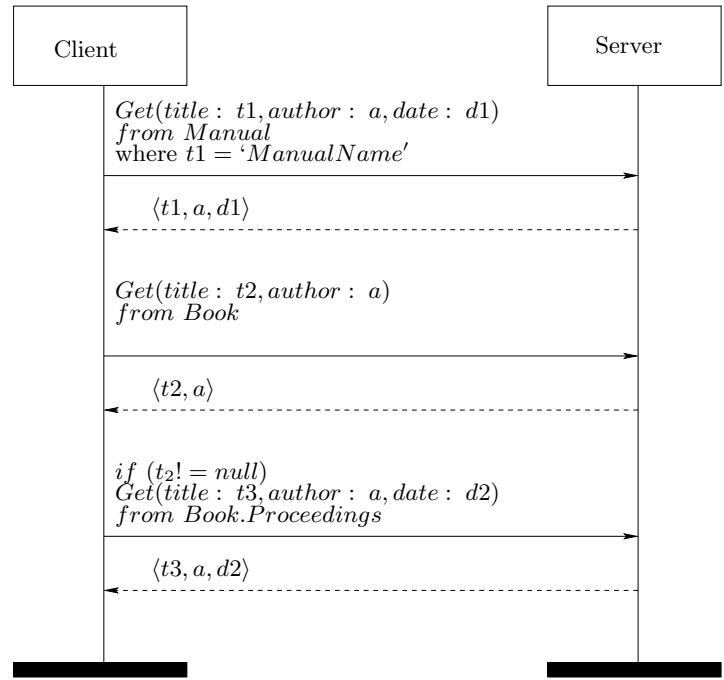}
\caption{Protocol on Publication Domain} \label{Proto1}
\end{figure}
\begin{figure}[htb]
\psfrag{C}{Client}
\psfrag{S}{Server}
\psfrag{q11}{$Get (Brand:\; b1, ItemsSold:\; c1, Year:\; y1)$}
\psfrag{q12}{$from$ $SaleStats$}
\psfrag{q13}{$where$ $(c1 > 10000)(y1 = 2009)$}
\psfrag{a1}{$\langle b1, c1, y1 \rangle$}
\psfrag{q21}{$Get (Brand:\; b1, Model:\; mod, Date:\; d1, Color:\; col)$}
\psfrag{q22}{$from$ $Vehicle.Truck$}
\psfrag{q23}{$where$ $(d1.year > 2000)(col = `Red')$}
\psfrag{a2}{$\langle b1, mod, d1, col \rangle$}
\centering
\includegraphics[scale=0.53]{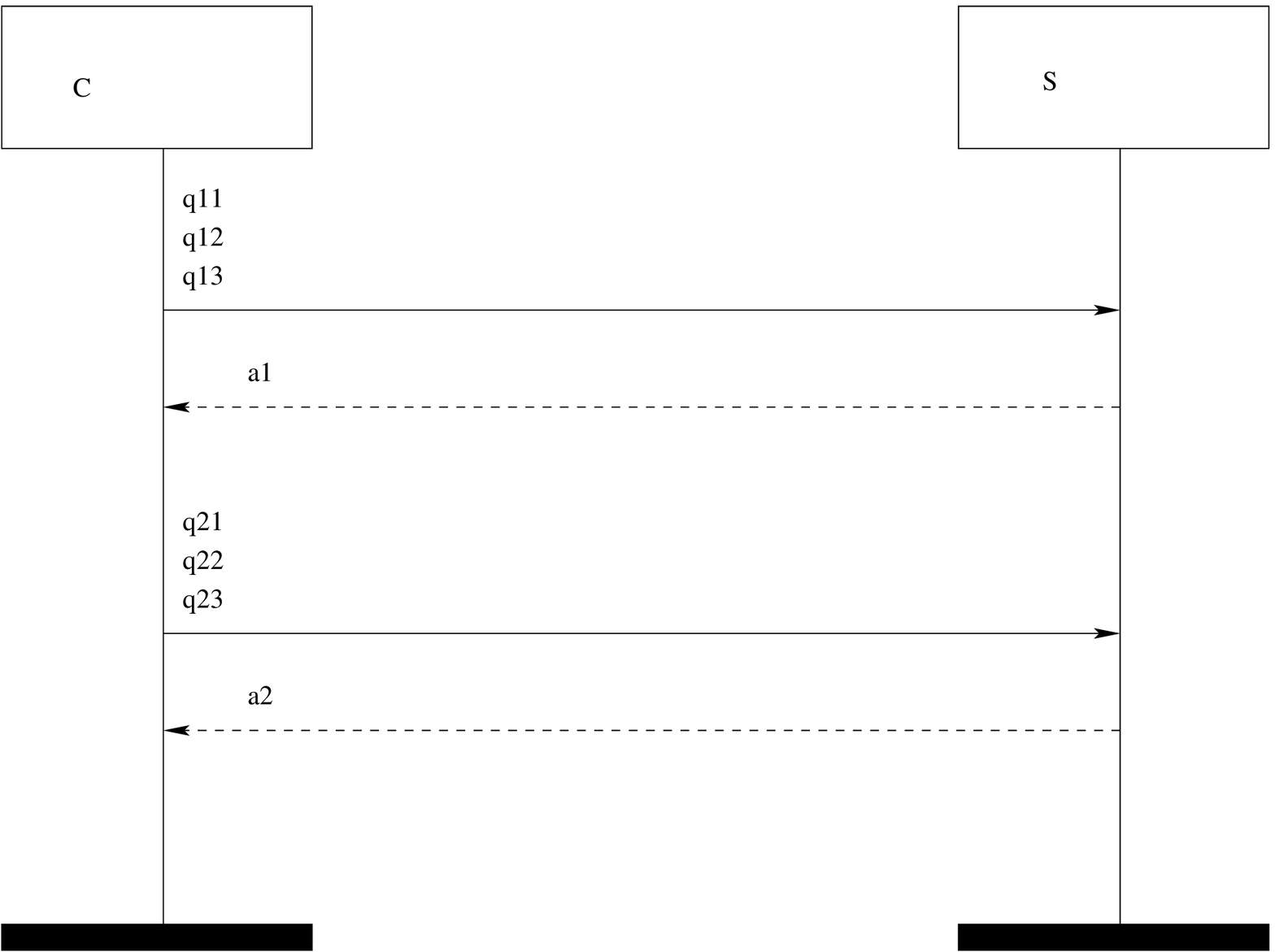}
\caption{Protocol on Automobile Domain} \label{Proto2}
\end{figure}

\begin{parawithheader}{Protocol - 1}
Consider the protocol shown in Figure~\ref{Proto1}. The protocol depicts a
conversation between a client and a server over the publication domain. The
query of the client is about the author of some specific manual. Then the client
makes a query to retrieve a book by the author of that manual. According to
the ontology of the client, \emph{`Proceedings'} is a subclass of \emph{`Book'}
and the client makes the next query to retrieve the proceedings by the same
author. If the server does not recognize \emph{`Proceedings'} as a sub class of
\emph{`Book'}, the query can not be answered by the server due to the mismatch
in the ontologies.
\end{parawithheader}

\begin{parawithheader}{Protocol - 2}
In Figure~\ref{Proto2} we present another protocol that exchanges information
about the automobile domain. The client makes a query to retrieve a brand
which has sold more than a specific number of vehicles in a particular year.
Then next query is made in the context of the previous query to check whether
that brand manufacture \emph{`Red Trucks'}. According to the ontology of
the client the color is a property of the vehicle class and therefore all
subclasses of vehicle class will have the color attribute. However if the server
recognizes \emph{`color'} as an attribute of some of the sub-classes(suppose
\emph{`car'} and \emph{`two-wheeler'}) instead of as an attribute of the class
\emph{`Vehicle'} itself, the query can not be answered by the server due to the
mismatch in the ontology.
\end{parawithheader}

\begin{parawithheader}{Protocol - 3}
In this example we present a protocol of an \emph{intelligent agent}. Consider
the semantic web service for an online store. The online store can queried to
retrieve the relevant information about the available items. Also consider a
multi-cuisine restaurant which is a client of that store. Whenever the stock of
some item, say $i_1$, falls below some level, the intelligent agent that works
on behalf of the restaurant, searches the availability of $i_1$ by querying the
online store. Suppose $i_1$ comes in two qualities, $q_1$ and $q_2$. The
protocol, that is used by that agent to find and buy the item under
consideration, is presented below using a format similar to pseudo code. Here
the \emph{buy} action is carried out by the agent  automatically, if the
precondition is satisfied.

\begin{footnotesize}
\begin{tabbing}
aaaa \= aaaa \= aaaa \= aaaa \= aaaa \= aaaa \kill
\rule{\linewidth}{0.02cm}
\\
\bf{Protocol for Buying an Item}\\
\rule[2mm]{\linewidth}{0.02cm}
\\
Get the availability $i_1$ of quality $q_1$;\\
If ($i_1$ of quality $q_1$ is available) \\
\>Get the price of $i_1$ of quality $q_1$;\\
\>If (the price is less than $C_1$) \\
\>\>Buy $i_1$ of quantity $Q_1$;\\ 
\>Else\\
\>\>Inform the Manager of the store;\\
Else\\
\>Get the price of $i_1$ of quality $q_2$;\\
\>If (the price is less than $C_2$) \\
\>\>Buy $i_1$ of quantity $Q_2$;\\ 
\>Else\\
\>\>Inform the Manager of the store;\\
\rule{\linewidth}{0.02cm}
\end{tabbing}
\end{footnotesize}

\end{parawithheader}

\subsection{Notion of Mismatch between two Ontologies}
We focus on the following two types of mismatch between the client and server
ontologies in this paper.
\begin{description}
 \item[Specialization Mismatch(Type-1):] In this type of incompatibility the
client recognizes a class $c_2$ as the specialization of another class $c_1$
whereas the server recognizes $c_2$ as the specialization of some other class
$c'_1$. Our first example (Figure~\ref{Proto1}) is an instance of this type.

 \item[Attribute Assignment Mismatch(Type-2):] A very common type of
incompatibility arises where the client and the server both recognize classes
$c'_1, \ldots, c'_n$ as the specializations of another class $c_1$, but the
client associates an attribute $\alpha$ with the super class $c_1$, whereas the
server associates $\alpha$ with some of the sub classes $c'_i, \ldots, c'_j$, $0
< i,j \leq n$. Since we view the mismatches from the query answering
perspective, we use the notion of this conflict from the query perspective. If
the set of variables that is used in a query $q$, is not available at server
side, we denote that as \emph{attribute level(Type-2)} mismatch. Our first
example (Figure~\ref{Proto2}) is an instance of this type.
\end{description}

\section{Graph Model of Ontology}\label{GraphModelSec}
 While describing an ontology using OWL, the class and
the attributes(modeled as properties in the context of OWL) are used to
represent the meta-data. We use a graph based approach to model the meta-data
that are described as classes and attributes in OWL. While using OWL, the 
\emph{properties} are used to express the attributes. Therefore we use the term
property and attribute interchangeably. We define the ontology graph as follows.

\begin{definition}
A {\bfseries graph model} for an ontology $O$ is $\mathcal{G} = (V, E)$ where,
$V$ is the set of vertices and $E$ is the set of directed edges. Each node $v_i
\in V$ represents a class in the OWL ontology and $v_i$ is associated with a
{\bfseries property list} $\mathcal{L}(v_i)$ whose elements are the
data properties of the class.  The directed edges can be of the following types
\begin{description}
 \item[Inheritance-Edge :] An \emph{inheritance-edge} $e_{ij} \in E$ from $v_i$
to $v_j$, where $v_i, v_j \in V$, if $v_j$ is a sub class of
$v_i$.
 \item[Property-Edge :] An \emph{property-edge} $e_{ij} \in E$ from $v_i$ to
$v_j$, where $v_i, v_j \in V$, if $v_j$ is an object property of $v_i$.
\end{description}

\end{definition}
 
\section{Overview of the Method}\label{OntoMethodSec}
In this section we present the relevant formalisms and present the overall
algorithm for solving the problem. The \emph{variable set} and the \emph{class
set} specified in the query $q$ are denoted by $S_v(q)$ and $S_c(q)$
respectively. We present a graph search based structural matching algorithm to
check the semantic safety of the protocol.

\begin{definition}
The \emph{specialization sequence} $\sigma = \langle c_1. c_2. \;\cdots \;. c_k
\rangle$ in a query $q$ is the sequence of classes that are concatenated
through the `$.$' operator, and for any two consecutive classes $c_i$ and
$c_{i+1}$ in the sequence, $c_i$ is the super class of $c_{i+1}$. Therefore the
elements of $S_c(q)$ can be individual classes or specification sequences.
\end{definition}

\subsection{Structural Algorithm to Check the Semantic Consistency}

\begin{footnotesize}
\begin{algorithm}\label{MatchingAlgo}
\caption{Check-Consistency}
\SetCommentSty{sf}
\SetKwInOut{Input}{input}
\SetKwInOut{Output}{output}

\Input{The Protocol $\mathcal{P}$ and the Server Ontology $\mathcal{O}_s$}
\BlankLine
\Indp
$V \leftarrow \; \{\}$\;
\ForEach{query $q$ in the protocol $\mathcal{P}$}
{
  \ForEach{element $\tau$ in $S_c(q)$}
  {
    \eIf{$\tau$ is a specialization sequence}
    {
      $c_1 \leftarrow$ the first concept of $\tau$\;
      $c_t \leftarrow$ FindMatch($\mathcal{O}_s$, $c_1$)\;
      \For{$i \leftarrow 2$ \KwTo $length(\tau)$}
      {
        $c_m \leftarrow$ the $i^{th}$ concept of $\tau$\;
        \eIf{any class $c'_t$ equivalent to $c_m$ is not found as a sub
              class of $c_t$ in $\mathcal{O}_s$}
        {
          Report Mismatch at $c_m$\;
        }
        {
          $c_{t} \leftarrow \; c'_{t}$
        }
      }
      $V \leftarrow$ $V \; \cup$ property set for $c_{t}$\;
    }
    {
      \tcc{$c$ is an individual class}
      $c_1 \leftarrow \; \tau$\;
      $c_{t} \leftarrow$ FindMatch($\mathcal{O}_s$, $c_1$)\;
      $V \leftarrow$ $V \; \cup$ property set for $c_t$\;
    }
  }
  \If{$S_v(q) \subsetneq V$}
  {
    Report $\{S_v(q) - V\}$ as unmatched variables\;
  }
}
\Indm
\end{algorithm}

\begin{function}\label{FindMatchFunc}
\caption{FindMatch($\mathcal{O}_s$, $c_i$)}
\SetCommentSty{sf}

\Indp
Find the class $c_{t}$ which is equivalent to $c_i$ in $\mathcal{O}_s$\;
\If{$c_{t}$ is not found in $\mathcal{O}_s$}
{
        Report Mismatch at $c_i$\;
        exit\;
}
return $c_i$\;
\Indm
\end{function}
\end{footnotesize}

\subsection{Working Example}\label{workingExampleSec}
We present a working example to describe how the algorithm works. Consider the
protocol shown in Figure~\ref{Proto1}. We elaborate the steps of applying
Algorithm~\ref{MatchingAlgo} with respect to the fragments of the client
and server ontologies shown in Figure~\ref{OntoFragClient} and
Figure~\ref{OntoFragServ} respectively. These fragments are taken from the
benchmark provided by~\cite{OAEI}. The benchmark has one reference ontology and
four other real ontologies and the domain of these ontologies is bibliographic
references. We have used the reference ontology as the server ontology and
another real ontology named INRIA as the client ontology. We have used a
pictorial representation which is similar to entity-relationship diagram to
show the fragments of the ontologies. The classes are represented by the rounded
rectangles and the ovals represent the properties of a particular class. The
class hierarchy is shown using arrows, that is a sub class is connected to its
super class by an arrow which is directed towards the sub class. The properties
that belong to a particular class are connected to the rounded rectangle
corresponding to that class through a line.

\begin{description}
 \item[Step-1:] While applying Algorithm~\ref{MatchingAlgo} to the server
ontology, the individual class \emph{`Manual'} is searched and since the search
is successful, it is checked that the attributes that are associated with
class \emph{`Manual'} in the query in the protocol are actually answerable by
the server and this check turns out to be successful for the ontologies that
are presented here.

 \item[Step-2:] The next query uses the class \emph{`Book'}.
Algorithm~\ref{MatchingAlgo} performs the consistency checking in the way that
is similar to the previous query and the check is successful. 

 \item[Step-3:] The third query uses a specialization sequence
\emph{`Book.Proceedings'}. Algorithm~\ref{MatchingAlgo} searches for the
\emph{`Book'} class in the server ontology and then checks whether
\emph{`Proceedings'} is a sub class of \emph{`Book'} in the server ontology.
Algorithm~\ref{MatchingAlgo} reports a failure since in the server ontology
\emph{`Proceedings'} is not a sub class of \emph{`Book'}.
\end{description}

\begin{figure}[htb]
\centering
\includegraphics[scale=0.65]{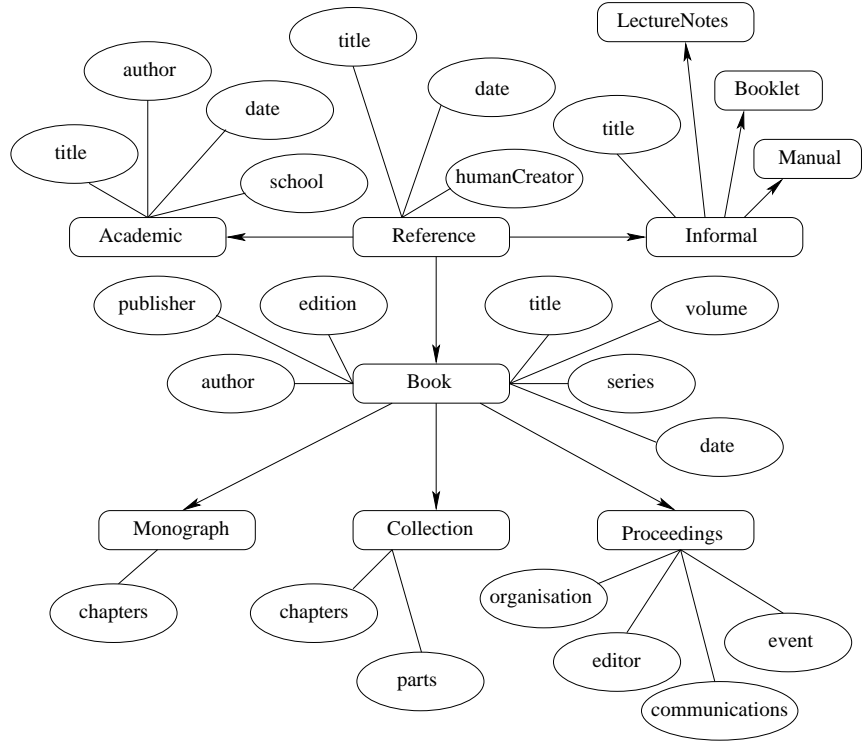}
\caption{Fragment of Client Ontology} \label{OntoFragClient}
\end{figure}

\begin{figure}[htb]
\centering
\includegraphics[scale=0.65]{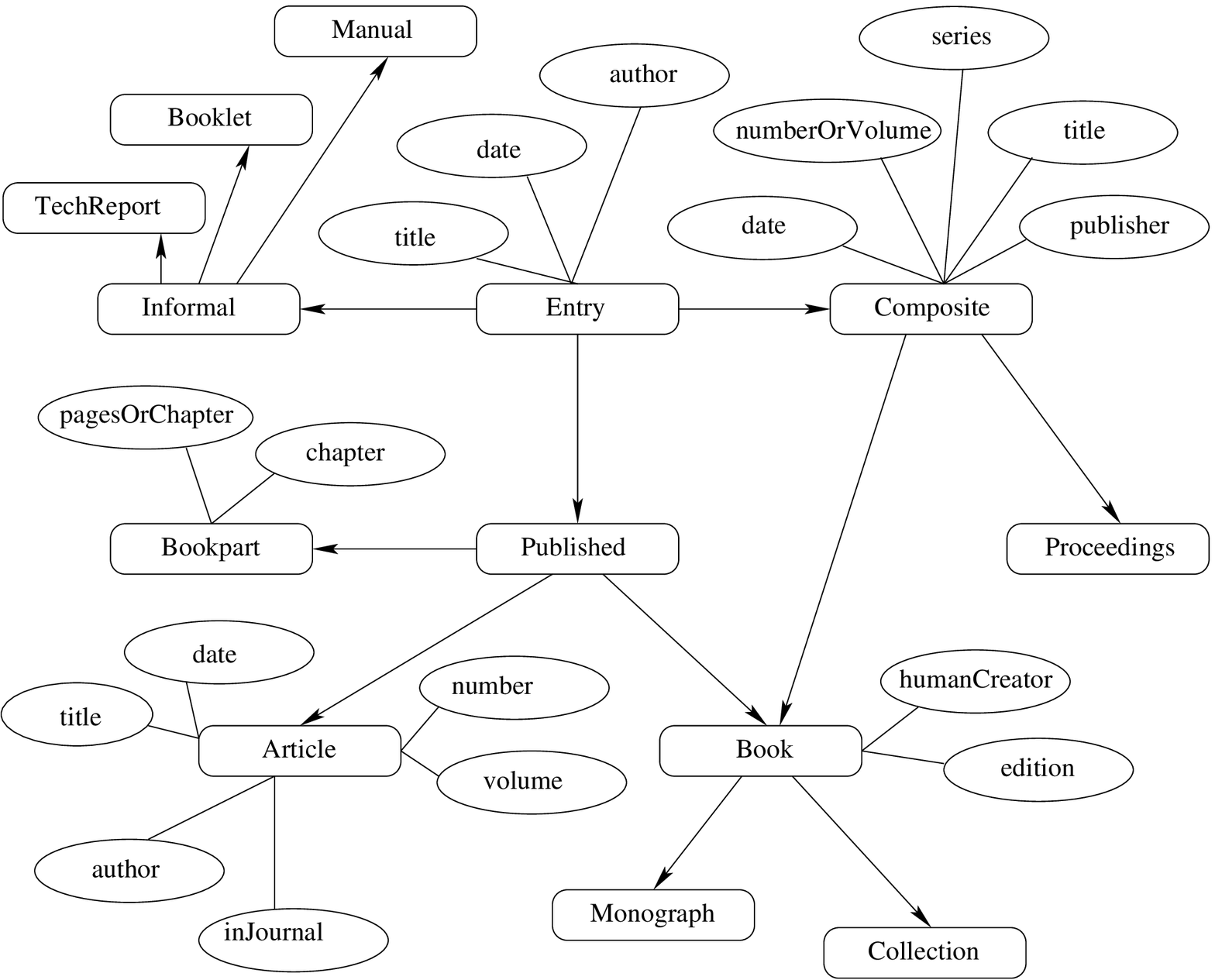}
\caption{Fragment of Server Ontology} \label{OntoFragServ}
\end{figure}

\subsection{Proof of Correctness}
\begin{theorem}{[Soundness]}\label{thm1}
The mismatches returned by Algorithm~\ref{MatchingAlgo} are correct.
\end{theorem}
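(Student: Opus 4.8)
The plan is to prove soundness by a case analysis over the three program points at which Algorithm~\ref{MatchingAlgo} (including its subroutine, Function~\ref{FindMatchFunc}) emits a report, and to show that in each case the reported item is a genuine mismatch in the sense of the \emph{Specialization Mismatch (Type-1)} and \emph{Attribute Assignment Mismatch (Type-2)} definitions. The three report sites are: (i) the failure of \textsc{FindMatch}, when a class $c_i$ named by the query has no equivalent in $\mathcal{O}_s$; (ii) the ``Report Mismatch at $c_m$'' line inside the specialization-sequence loop; and (iii) the ``Report $\{S_v(q) - V\}$ as unmatched variables'' line. Soundness is exactly the statement that each such report is warranted, so it suffices to discharge the three cases independently.

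For site (i) the argument is immediate from the definitions: \textsc{FindMatch} reports only when no class of $\mathcal{O}_s$ is equivalent to the client class $c_i$, so the server does not recognise the class referenced by the query at all and the query is unanswerable. For site (ii), consider a specialization sequence $\sigma = \langle c_1.\,c_2.\,\cdots.\,c_k\rangle \in S_c(q)$. I would argue by induction on the loop index $i$ that after each successful iteration the matched server class $c_t$ is an equivalent of the client class $c_i$, reached by a chain of inheritance-edges in $\mathcal{O}_s$. When the loop reports at $c_m$, no class equivalent to $c_m$ occurs as a sub class of the current $c_t$ (the server-side image of $c_{m-1}$); since the client's sequence asserts that $c_m$ is a specialization of $c_{m-1}$, this is precisely a Type-1 mismatch, the client placing $c_m$ under $c_{m-1}$ while the server does not.

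For site (iii) I would introduce a loop invariant for the set $V$: after the inner \textbf{foreach} has processed all elements of $S_c(q)$, the set $V$ contains exactly the data properties that $\mathcal{O}_s$ makes available for the classes and specialization-sequence endpoints referenced by $q$. Granting this invariant, any variable in $S_v(q) \setminus V$ corresponds to an attribute that the server does not associate with the relevant class, which is the query-level reading of a Type-2 mismatch; hence the reported set of unmatched variables is sound.

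The main obstacle is establishing the invariant for $V$ together with the correctness of the class matching on which it relies. Both the accumulation ``$V \leftarrow V \cup$ property set for $c_t$'' and the inductive claim in site (ii) are only as sound as the equivalence test inside \textsc{FindMatch} and the sub-class lookup used in the loop; the argument therefore rests on assuming these are themselves sound, namely that two classes are declared equivalent only when they genuinely correspond across the two ontologies, and a sub-class relation is found only when it genuinely holds in $\mathcal{O}_s$. The delicate point is to show that, under this assumption, tracing the specialization sequence link by link from the matched root faithfully reconstructs the server's view of the client's class, so that the property set unioned into $V$ is the correct one; once that is in place, cases (ii) and (iii) follow from the respective mismatch definitions.
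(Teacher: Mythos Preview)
Your proposal is correct and follows essentially the same approach as the paper: a three-way case split over the report sites, with an inductive argument for the specialization-sequence case and a direct appeal to the Type-2 definition for the unmatched-variables case. The paper phrases the middle case as induction on the length $k$ of $\sigma$ rather than on the loop index, and it does not spell out the invariant for $V$ or the reliance on the soundness of \textsc{FindMatch} as explicitly as you do, but the underlying argument is the same.
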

\begin{proof}
Algorithm~\ref{MatchingAlgo} reports mismatch in three cases. We observe each
of the cases as follows.
\begin{description}
 \item[Mismatch in individual class:] If Algorithm~\ref{MatchingAlgo} does not
find a matching class $c$ which is used in a query, a conflict is reported.
Since the class is not recognized by the server, it is not possible for the
server to answer the query. Therefore the outcome of the algorithm is correct.

 \item[Mismatch in specialization sequence:] Consider a specialization sequence
$\sigma$ $=$ $\langle c_1. c_2. \;\cdots \;. c_k \rangle$ in a query $q$ on
which Algorithm~\ref{MatchingAlgo} returns a mismatch. We prove the correctness
of the consistency checking by induction on the length $k$ of $\sigma$. \\
{\itshape Basis($k=1$):} In this case there is only one class in the
specialization sequence and this case falls under the case of mismatch in
individual classes.\\
{\itshape Inductive Step:} Suppose Algorithm~\ref{MatchingAlgo} returns the
mismatch correctly for specialization sequences having length $k$. We prove
that Algorithm~\ref{MatchingAlgo} reports the conflicts correctly for the
specialization sequences having length $k+1$. There can be two possible
cases.
\begin{enumerate}
 \item The conflict is reported for a class that appears in the $i^{th}$
location of the sequence, where $1 < i < k+1$. The reported mismatch is correct
according to the inductive hypothesis.

 \item  The conflict is reported for the $k+1^{th}$ class of the sequence. In
this case there exists a matching specialization sequence at server ontology
up to length $k$. But $c_{k+1}$ is not a sub class of class $c_k$ according to
the server ontology. Therefore the conflict reported by
Algorithm~\ref{MatchingAlgo} is correct.
\end{enumerate}

 \item[Mismatch on variables:] Suppose the set of variables that are specified
by the client is $V_c$ in a query $q$ corresponding to the class set $S_c(q)$
and the failure is reported on some variable in $V_c$. Since
Algorithm~\ref{MatchingAlgo} first finds the matches corresponding to the
classes in $S_c(q)$ and then checks for the answerability with respect to the
variable set, in this case every class in $S_c(q)$ is matched with suitable
classes in the server side. Now Algorithm~\ref{MatchingAlgo} reports conflict if
there exists any variable that is not recognized by the server as an attribute
of at least one of the classes that correspond to the classes in $S_c(q)$.
Therefore the reported conflict falls under the \emph{Type-2 or attribute
level} conflict category. \qed
\end{description}

\end{proof}

\begin{theorem}{[Completeness]}\label{thm2}
For any protocol $\mathcal{P}$, if there is any mismatch of type-1 or type-2,
Algorithm~\ref{MatchingAlgo} reports it.
\end{theorem}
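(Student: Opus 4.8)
The plan is to prove the statement by a direct case analysis on the kind of mismatch, exploiting the fact that Algorithm~\ref{MatchingAlgo} exhaustively iterates over every query $q$ in $\mathcal{P}$ and, within each query, over every element $\tau$ of the class set $S_c(q)$. It therefore suffices to fix the query $q$ and the element $\tau$ that witness the assumed mismatch and show that the corresponding iteration of the algorithm necessarily enters a reporting branch. Since the two mismatch types of Section~\ref{ProtoConSec} are phrased from the query-answering perspective, each type corresponds to a distinct reporting site in the pseudocode, so the argument reduces to checking that the control flow reaches that site. This is the natural dual of the soundness argument in Theorem~\ref{thm1}.

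First I would treat the \emph{Type-1 (specialization)} mismatch. By definition such a mismatch occurs inside $q$ precisely when some element $\tau = \langle c_1.c_2.\cdots.c_k\rangle$ of $S_c(q)$ is a specialization sequence whose chain of subclass relations is not fully recognized by the server ontology $\mathcal{O}_s$. Let $i$ be the smallest index at which the chain breaks in $\mathcal{O}_s$. If $i = 1$, then $c_1$ has no equivalent class in $\mathcal{O}_s$ and \textsc{FindMatch} reports a mismatch at $c_1$; otherwise $1 < i \le k$, and when the \textbf{for} loop reaches $c_m = c_i$ the matched ancestor $c_t$ has already been fixed for positions $1,\dots,i-1$, so the guard ``an equivalent subclass of $c_t$ is not found'' evaluates to true and the algorithm executes \emph{Report Mismatch at $c_m$}. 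The point requiring care is that the class $c_t$ maintained by the loop through position $i-1$ is exactly the server counterpart the client intends; this I would establish by induction on the prefix length, mirroring the induction used for Theorem~\ref{thm1}.

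For the \emph{Type-2 (attribute assignment)} mismatch the argument is more direct, since by definition a Type-2 mismatch in $q$ means at least one variable of $S_v(q)$ is not answerable by the server for the matched class set. After the inner loop accumulates into $V$ the union of the property sets of all server classes matched against the elements of $S_c(q)$, this missing variable cannot belong to $V$, so the coverage test on $S_v(q)$ against $V$ fails and the algorithm reports the offending variable(s). The supporting lemma here is that the set $V$ computed by the algorithm equals exactly the set of attributes the server can supply for the matched classes, so that a genuinely unavailable attribute is never spuriously absorbed into $V$.

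I expect the main obstacle to be precisely this coordination between the class-matching phase and the subsequent variable check. Completeness presupposes that \textsc{FindMatch} and the subclass search return the \emph{correct} server classes whenever a match exists, so that (i) for Type-1 the loop does not terminate before reaching the first genuine break, and (ii) for Type-2 the accumulated set $V$ is neither too large (which would mask a real mismatch) nor smaller than the true answerable set. Formulating the loop invariant ``$c_t$ is the correct server image of the processed prefix, and $V$ is precisely the answerable-attribute set'' and discharging it alongside the induction is where the real work lies; once it is in place, each mismatch type maps onto its reporting branch and completeness follows.
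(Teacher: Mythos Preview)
Your proposal is correct and follows essentially the same approach as the paper: a case split on the mismatch type, tracing the control flow of Algorithm~\ref{MatchingAlgo} to the corresponding reporting site for each case. The paper's own proof is rather brief and does not articulate the loop invariant or the minimal-index argument you give for Type-1, so your version is more careful, but the underlying decomposition and reasoning are the same.
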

\begin{proof}
This proof is done by construction. For each of the type of the mismatches we
show that Algorithm~\ref{MatchingAlgo} uses a sequence of operations through
which the mismatch is detected. We present the proof for each mismatch type.
\begin{description}
 \item[Type-1 Mismatch:] Consider a specialization sequence $\sigma = \langle
c_1. c_2. \;\cdots \;. c_k \rangle$ which is used in query $q$.
Algorithm~\ref{MatchingAlgo} starts by finding the class that is equivalent to
$c_1$ at the server side. If there is only one class in $\sigma$ then
Algorithm~\ref{MatchingAlgo} reports mismatch when the corresponding class is
not found in the server ontology. When the length of $\sigma$ is greater than
$1$, Algorithm~\ref{MatchingAlgo} continues to check whether $c_i$ is a subclass
of $c_{i+1}$ where $1 < i < k$. A mismatch is reported by
Algorithm~\ref{MatchingAlgo} whenever $c_i$ is a subclass of $c_{i+1}$ for $1 <
i < k$. Hence if there exists any mismatch in any specialization sequence, the
algorithm reports it.
    
 \item[Type-2 Mismatch:] Consider a query $q$ made by the client and the set of
variables is $V_c$ in $q$. The set of classes is denoted by $S_c(q)$. We argue
that, if there exists a \emph{Type-2} mismatch for query $q$,
Algorithm~\ref{MatchingAlgo} reports it. For \emph{Type-2} mismatches
Algorithm~\ref{MatchingAlgo} first checks the presence of the equivalent classes
$c^s_i$ in the server ontology and computes the union $V_s$ of the attributes
corresponding to every $c^s_i$. If there is any variable/s in $V_c$ that are not
present in $V_s$, a conflict is reported by Algorithm~\ref{MatchingAlgo}. Hence
if there exists a \emph{Type-2} mismatch for a query,
Algorithm~\ref{MatchingAlgo} reports it. \qed
\end{description}

\end{proof}

\section{Ontology with Back-end Database}\label{OntoDatabaseSec}
In this section we describe the two level representation for describing
ontologies -- using OWL to describe the classification and using database
to store the instances. This type of representation is helpful for describing
domains with large number of instances. From the point of view of the instances
of classes, the classes in an ontology can be categorized as follows.
\begin{enumerate}
 \item  Classes of Abstract Type -- these classes are used for purely the
purpose of describing a domain in hierarchically. These classes does not have
any instances. They act only as the super class of other classes.
 \item  Classes with Instances -- these classes may act as super class of other
classes but they have a non-empty set of instances.
\end{enumerate}
Consider the ontology fragment in Fig.~\ref{OntoFragServ}. Here \emph{Entry},
\emph{Informal}, and \emph{Composite} are the example of abstract classes. On
the other hand, \emph{Book}, \emph{Monograph} etc. are the example of classes
with instances. Although \emph{Book} is a super class of \emph{Monograph} and
\emph{Collection}, it is possible to have instances of \emph{Book} which are
neither \emph{Monograph} nor \emph{Collection}. 

While using the two level representation, it is important to keep the database
schema consistent with the wrapper ontology. A choice of describing the database
schema could be maintaining a table for each of the non-abstract classes present
in the ontology. Alternative ways of describing the database are possible, but
we use this simplistic representation of the database schema to present the
proposed algorithm.

\subsection{Query Answering in the Presence of the Database}
When the server side adheres the two layer structure for its ontology, every
query in the protocol is answered by generating corresponding tuples from the
back-end database. In the context of the back-end database the occurrences of
variables in a protocol, can be categorized into the following types.

\begin{description}
 
 \item [Uninstantiated:] When a variable is placed in a query for the first
time without initialization, it is referred to as an \emph{uninstantiated}
occurrence of variable or in short \emph{uninstantiated variable}. The values
for the variables are instantiated at the side where the query is evaluated. 

 \item [Instantiated:] Other than the first occurrence without initialization,
all other occurrences of a variable is referred to as \emph{instantiated}
occurrence of that variable or in short \emph{instantiated variable}. At these
occurrences, the variables are already assigned to some value by the server.
These occurrences are used for value propagation.

\end{description}

\begin{parawithheader}{Evaluation Semantics of a Query}
The semantics of the evaluation of the query is similar to the \emph{Conjunctive
Datalog}. The evaluator of the query tries to assign value to uninstantiated
variables and forms a tuple which satisfies logical \emph{and} of the conditions
specified in the \emph{where} clause of the query. Same variables in different
classes specified in the \emph{where} clause of the query have to be assigned to
the same value.
\end{parawithheader}

\noindent
Consider the protocol presented in Fig.~\ref{Proto1}. In
Section~\ref{workingExampleSec} we have shown that the protocol has an
ontological conflict, when the client and the server uses the ontologies in
Fig.~\ref{OntoFragClient} and Fig.~\ref{OntoFragServ} respectively. Consider
the fact, that the condition, $(t_2 != null)$ may always evaluate false due to
the actual data that is stored in the database of the server. In that case, the
ontological conflict in the last query, $[\textsf{Get} (title:\; t3, author:\;
a, date:\;d2) \, \textsf{from} \, Book.Proceedings]$, will never be sensitized.
In other words the conflicts at the ontology level may turn out to be spurious.
We define the \emph{spuriousness} of an ontological conflict as follows.
\begin{definition}
An ontological conflict is \emph{spurious}, when for all possible correct
instantiations of the variables, the conflict is not reachable from the start
state of the protocol, due to the decisions taken at different stages of the
protocol. By \emph{correct} instantiations we mean the instantiations that
conform to the evaluation semantics defined earlier.
\end{definition}

\subsection{Related Formalisms}
Here we present the relevant formalisms for describing the algorithm to check
the presence of the conflict detected by Algo.~\ref{MatchingAlgo} at the current
state of the server database.

\begin{definition}
The {\em assignable set} of values for a variable $\varphi$ is the set of
values that can be assigned to $\varphi$ during the instantiation and it is
denoted as {\em AssignableSet($\varphi$)}.
\end{definition}
Suppose in a protocol ${\mathcal P}$, a query $q$ has variable set $v$ =
\{$\varphi_1, ..., \varphi_n$\} and concept set $C$ = \{$C_1, ..., C_m$\}. Let
us also assume that in ${\mathcal P}$ all the variables of $q$ are
\textit{uninstantiated} variables. The notion of assignable set in the presence
of the previously instantiated variables is discussed later. The evaluation of
the query basically assigns a values to each of the variables in that query. All
the variables together form a tuple $\tau$ = $\langle val_1, val_2, \ldots ,
val_n \rangle$ such that if any variable $\varphi_k$ is common between class
$C_i$ and class $C_j$ then both the classes have to assign same value to the
variable $\varphi_k$. All such possible tuples that can be populated by the
evaluator side, form the assignable set of values for $v$ and the assignable set
for a variable $\varphi_i$ is:
\[
AssignableSet(\varphi_i) = \{val \, | \, \exists \tau \in AssignableSet(v)
    \wedge \tau = \langle val_1, val_2, ..., val_n \rangle 
    \wedge val_i = val \}
\]
The \emph{dependencies} among the variables play an important role for
determining the AssignableSet for a variable.
\begin{definition}
In a query, if some of the variables are previously instantiated, we say that
the previously instantiated set of variables is \emph{constraining} the set of
values of the uninstantiated variables. 
Suppose in the same query $q$, among the variables specified in $q$, $\varphi_1,
\cdots , \varphi_k$ are previously instantiated and $\varphi_{k+1}, \cdots,
\varphi_n$ are the variables that are instantiated by the evaluation of $q$. We
define the {\em constrain relation} ${\mathcal R}_{\mathcal C}$ and the {\em
ConstrainSet} as follows.
\begin{align*}
{\mathcal R}_{\mathcal C} &= \{ (\varphi_i, \varphi_j)  \; \big{|} \;
     \text{ where } \varphi_i \in \{ \varphi_1, \cdots , \varphi_k \} 
     \text{ and } \varphi_j \in \{ \varphi_{k+1}, ..., \varphi_n \} \}\\
ConstrainSet(\varphi_i) &= \{ \varphi_{k+1}, \varphi_{k+2},  
     \cdots , \varphi_n
\} 
\end{align*}
\end{definition}
Consider the same query $q$. The AssignableSet for the set  of variables of $q$
is the set of all tuples $\tau$ = $\langle val_1,val_2, \ldots...,val_n \rangle$
such that the following conditions hold.

\begin{itemize}
 \item If any variable $\varphi_k$ is placed in more than one concepts, all the
concepts assign same values to $\varphi_k$.
 \item $(val_1 \in A_1) \wedge ... \wedge (val_k \in A_k)$, where $A_1,
\cdots, A_k$ are the assignable sets of variable $\varphi_1, \cdots, \varphi_k$
respectively.
\end{itemize}
\begin{definition}
The {\em RestrictSet} for a variable set  $v$ is obtained by computing the
transitive closure of the ${\mathcal R}_{\mathcal C}$ on $v$. 
\end{definition}
We use the notion of the \textit{split} operation on the assignable set of
values of a variable and it works as follows. Let a query, $q$, consists of
concept $C_i$ with a uninstantiated variable $\varphi_i$, and a previously
instantiated variable $\varphi_j$. Suppose a decision is made on the variable
$\varphi_j$. In each branch, the possible values of $\varphi_j$ forms a subset
of its assignable set. Since the value of $\varphi_i$ is dependent on
$\varphi_j$, in each branch the possible values for $\varphi_i$ also forms a
subset of the assignable set of $\varphi_i$.
\begin{definition}
The {\em SplitSet} for a variable set $v$ is a subset of RestrictSet($v$) 
and is defined as:
\begin{align*}
SplitSet(v) = \{\varphi_j \, | \, &\varphi_j \in RestrictSet(\varphi_i)
    \text{ and } \varphi_j \text{ appears in a condition in the path of the
    protocol}\\
  &\text{ from the start of the protocol to the query with ontological
    conflict }\varphi_i \}
\end{align*}
\end{definition}
\begin{definition}
\emph{RelevantConditionSet} of a variable set $v$ is the set of conditions in
true form on the variable set ${v}_{split}$, which have to be \textit{true} for
reaching the conflicting query.
\end{definition}


\subsection{Algorithm for Detecting Spurious Conflicts with respect to the
Back-end Databases}
\begin{footnotesize}
\begin{algorithm}[H]
\label{VerifyAllConflict}
\caption{Verify the Conflicts on Back-end Database}
\Indp
Initialize a hash table $H^t$ \;
\BlankLine
\tcc{\em In the hash table $H^t$, a set of variables $v$ forms the key,
which is mapped to the AssignableSet of the variable set $v$}
\BlankLine
\ForEach{conflicting query $q$}{
  $v \leftarrow$ The set of instantiated variables specified in $q$\;
  \uIf{VerifyConflict($v$)}
  {
    Report mismatch on variable $v$ at database level\;    
  }
  \lElse
  {
    Report the conflict as spurious\;
  }
}
\Indm
\end{algorithm}

\begin{function}
\label{VerifyConflictFunc}
\caption{VerifyConflict($v$)}
\Indp
${v}_{restrict} \leftarrow$ The RestrictSet for the variable set $v$\;
${v}_{split} \leftarrow$ The SplitSet for the variable set $v$\;
${v}^{s}_{restrict} \leftarrow$ MakeSets(${v}_{restrict}$)\;
Construct a priority queue ${\Gamma}$ of variable sets\;
\tcc{${\Gamma}$ is ordered according to the order of the instantiations of
     its variable sets}
\ForAll{variable set $v_i \in {v}^{s}_{restrict}$}
{
  Enqueue $v_i$ in ${\Gamma}$\;
}
Table set $S^t \leftarrow \{\}$\;
\While {$\Gamma$ is not empty}
{
  $u \leftarrow$ Dequeue ($\Gamma$)\;
  \eIf {(VerifyConflict($u$))} 
  {
    \tcc{The set of possible valuations for $u$ is not empty}
    $t \leftarrow$ Search $H^t$ and return the table containing $u$ \;
    \If {$t$ $\notin$ $S^t$}
    {
      $S^t \leftarrow S^t$ $\cup$ $\{t\}$\;
    }
  }
  {
    \tcc{The set of possible valuations for $u$ is empty, 
         so the conflict is spurious}
    return false;
  }
}
Find the query $q$ that instantiates variable set $v$\;\nllabel{getQuery}
\If {${v}_{split}$ != $\varnothing$}
{
  $c \leftarrow$ The RelevantConditionSet on the variable set ${v}_{split}$\;
  $\delta \leftarrow SplitAssignableSet(\delta, {v}_{split}, c)$\;
}
\eIf {$\delta == \varnothing$ }
{
  Report the conflict on $v$ as spurious\;
  return false;
}
{
  Insert $\delta$ in $H^t$ \;
  return true;
}
\Indm
\end{function}

\newpage
\begin{function}[H]
\label{MakeSetsFunc}
\caption{MakeSets($v$)}
\Indp
{
  Initialize set of variable sets $v^{ret} = \{\}$\;
  \While {$v$ is not empty}
  {
    Find a query $q$ that instantiates some of the variables in $v$\;
    Initialize variable set $v_{temp} = \{\}$\;
    \ForAll{variable $\varphi_i \in v$ \emph{and} $\varphi_i$ is
             instantiated by $q$}
    {
      $v \leftarrow v - \{\varphi_i\}$\;
      $v_{temp} \leftarrow v_{temp} \cup \{\varphi_i\}$\;
    }
    $v^{ret} \leftarrow v^{ret} \cup \{v_{temp}\}$\;
  }
}
\Indm
\end{function}

\begin{function}[H]
\label{GenerateAssignableSetFunc}
\caption{GenerateAssignableSet($q$, $S^t$)}
\Indp
{
  \tcc{Suppose $q$ is made with the concepts $C_1, ..., C_n$ and $\varphi_{i1},
       \ldots, \varphi_{ik}$ are the uninstantiated variables corresponding to
       the concept $C_i$}
  $v \leftarrow \{\varphi_{ij}$ $|$ $\varphi_{ij} \neq * \}$\;
  \eIf{$S^t == \Phi$}
  {
    \tcc{All the variables of $q$ are uninstantiated}
    Tuple set $T \leftarrow (C_1 \Join C_2 \Join ... \Join C_n)$\;\nllabel{T1}
  }
  {
    \tcc{Some of the variables of $q$ are previously instantiated and $t_1, ...,
         t_m$ $\in$ $S^t$ are the tuple sets corresponding to those variables}
    Tuple set $T \leftarrow (C_1 \Join C_2 \Join \ldots \Join C_n \Join t_1
                             \Join \ldots \Join t_m)$\;\nllabel{T2}
  }
  Relational algebra query $q^{Rel} \leftarrow \pi_{v}(T)$\;
  Compute $q^{Rel}$ and return the set of tuples\;
}
\Indm
\end{function}

\begin{function}[H]
\label{SplitAssignableSetFunc}
\caption{SplitAssignableSet($\delta$, ${v}_{split}$, $c$)}
\Indp
{ 
  \tcc{Suppose $c_1, \cdots , c_i \in c$}
  Relational algebra query $q^{Rel} \leftarrow \sigma_{(c_1 \vee c_2 \vee
      \ldots \vee c_i)}(\delta)$\;
  Compute $q^{Rel}$ and return the set of tuples\;
}
\Indm
\end{function}
\end{footnotesize}

\noindent
This algorithm can also be used by the server as the protocol
progresses(described as Scenario-4 in Section~\ref{IntroSec}). In that case, the
variables in the queries which are already executed, have some value assigned
to them and those variables will be considered as \emph{instantiated} by the
algorithm.

\subsection{Proof of Correctness}
The proof of correctness of Algo.~\ref{VerifyAllConflict} is presented below.
Algo.~\ref{VerifyAllConflict} verifies the spuriousness of conflicts returned by
Algo.~\ref{MatchingAlgo} on the server database. 

\begin{theorem}{[Soundness]}\label{thm3}
Algorithm~\ref{VerifyAllConflict} correctly reports the spuriousness of
conflict on the set of variables $v'$, where $v' = v \cup RestrictSet(v)$ and
$v$ is the set of previously instantiated variables in a query $q$ of protocol
$\mathcal{P}$ with ontological conflict. 
\end{theorem}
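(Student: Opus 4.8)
The plan is to reduce the claim to a single statement about emptiness of the assignable set that the algorithm materializes in the database. By the definition of spuriousness, the conflict on the conflicting query $q$ is spurious exactly when there is no correct instantiation of the variables that (i) is realizable over the current server database under the conjunctive evaluation semantics and (ii) satisfies every branch decision lying on the path from the start state to $q$. I would therefore prove the following invariant about \emph{VerifyConflict}: for every variable set $u$ that it processes, the call \emph{VerifyConflict}$(u)$ returns \textbf{true} iff the set of valuations of $u$ that are realizable over the database and consistent with the already-instantiated variables constraining $u$ is non-empty, and whenever it returns \textbf{true} the table $\delta$ inserted into $H^t$ is exactly that assignable set. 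The soundness statement then follows by instantiating the invariant at the top-level variable set $v' = v \cup RestrictSet(v)$.

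First I would establish that the dependency structure is well-founded: since the constrain relation $\mathcal{R}_\mathcal{C}$ only relates an earlier-instantiated variable to a later-instantiated one, the transitive closure defining $RestrictSet$ is acyclic, and the priority queue $\Gamma$ (ordered by instantiation order) yields a topological processing order. I would then prove the invariant by induction along this order. For the base case, a variable set with empty $RestrictSet$, \emph{GenerateAssignableSet} is invoked with $S^t = \Phi$, so it forms $T = C_1 \Join \cdots \Join C_n$ and returns $\pi_v(T)$; because the equi-join identifies variables shared between concepts, this projection coincides with $AssignableSet(v)$ under the conjunctive Datalog semantics, so the returned set is non-empty iff a realizable valuation exists. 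For the inductive step, $RestrictSet(u)$ is partitioned by \emph{MakeSets} into per-query blocks, each handled before $u$ by the ordering of $\Gamma$; by the induction hypothesis each recursive call is correct, and if any returns \textbf{false} the corresponding dependency has empty assignable set, forcing $u$'s assignable set to be empty, so returning \textbf{false} is correct. Otherwise the dependency tables are collected in $S^t$ and \emph{GenerateAssignableSet} computes $\pi_v(C_1 \Join \cdots \Join C_n \Join t_1 \Join \cdots \Join t_m)$, which by the $ConstrainSet$ semantics restricts $u$'s variables exactly to the values compatible with the fixed dependency values, establishing the invariant for $u$.

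It remains to account for the branch conditions. After the assignable set $\delta$ for $v'$ is built, if $v_{split} \neq \varnothing$ the algorithm forms the $RelevantConditionSet$ $c$ — the conditions on the split variables that must hold to reach $q$ — and applies \emph{SplitAssignableSet}, i.e. a selection $\sigma$ over $c$ to $\delta$. I would argue that the surviving tuples are precisely the realizable valuations that also respect the branch decisions on the path to $q$, so $\delta$ becomes empty after the split iff condition (ii) above can never be met jointly with (i). Combining this with the invariant, \emph{VerifyConflict}$(v)$ returns \textbf{true} iff some correct instantiation actually reaches $q$, hence Algorithm~\ref{VerifyAllConflict} reports the conflict as spurious iff it is spurious on $v'$, which is the claim.

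The main obstacle will be justifying rigorously that the relational-algebra evaluation faithfully realizes the set-theoretic $AssignableSet$ relative to the $ConstrainSet$ dependencies: one must show that the chain of equi-joins propagates value-propagation constraints across distinct queries correctly and that the $\Gamma$-order guarantees every dependency table is present in $S^t$ before it is used. A secondary subtlety is reconciling the selection used by \emph{SplitAssignableSet}, written as a disjunction $\sigma_{(c_1 \vee \cdots \vee c_i)}$, with the conjunction of path conditions demanded by reachability; this requires a careful account of how $RelevantConditionSet$ aggregates the conditions encountered along the single branch leading to the conflicting query.
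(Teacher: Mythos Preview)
Your approach is essentially the same as the paper's: both argue by induction on the recursive structure of the \emph{VerifyConflict} calls, with the base case being an empty $RestrictSet$ and the inductive step appealing to correctness of the dependency tables accumulated in $S^t$ before the final join. Your formulation is more careful---you state an explicit invariant, justify well-foundedness of the recursion via the instantiation order, and explicitly treat the \emph{SplitAssignableSet} step and its disjunction/conjunction tension---whereas the paper's proof leaves these points implicit or unaddressed.
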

\begin{proof}
The proof is done using induction. We do the induction on the integer parameter
{\em n}, where {\em n} is the total number of {\em VerifyConflict} function
calls done by Algorithm~\ref{VerifyAllConflict} for $q$. Among the different
{\em VerifyConflict} function calls, first call is done by
Algorithm~\ref{VerifyAllConflict} and the others are recursive calls. 

\begin{parawithheader}{Basis (n = 1)}
In this case RestrictSet($v$) = $\phi$. In this case if the $AssignableSet(v)$
is $\varnothing$ Algo.~\ref{VerifyAllConflict} correctly reports the conflict as
spurious, otherwise Algo.~\ref{VerifyAllConflict} reports the conflict as not
spurious, which is correct. 
\end{parawithheader}

\begin{parawithheader}{Inductive Step} 
We assume that the spuriousness of a conflict reported for the queries with
ontological conflict in $n$ steps are true. We now prove that the spuriousness
of a conflict that is reported in $(n+1)$ steps are correct. Consider the {\em
VerifyConflict} function call at Algo.~\ref{VerifyAllConflict} and without loss
of generality, we can assume this function call as the $(n+1)^{th}$ function
call (in the order of returning of the function calls). Therefore the other
calls are recursive calls done by the {\em VerifyConflict} to itself. The
following two cases are possible.

\begin{enumerate}
 \item The conflict may be detected as spurious by some call which is not the
$(n+1)^{th}$ call. In this case the spuriousness of the conflict  is correct by
the inductive hypothesis.

 \item The conflict is detected as spurious at the $(n+1)^{th}$ call to {\em
VerifyConflict}. All other previous calls to {\em VerifyConflict} add a table
to $H^t$ and the set of tables are kept in $S^t$. After that, function
GenerateAssignableSet is called to compute the assignable set for the set of
previously instantiated variables $v$ in the query $q$ with ontological
conflict. It follows from the description of the function, that this function
restricts the set of valuations of $v$ by taking the natural join with the
valuations of variables in RestrictSet($v$). Since the conflict is not detected
as spurious in the variables in RestrictSet($v$), when the function detects the
conflict as spurious, the statement $\delta == \varnothing$ is true. Therefore
in the protocol $q$ is not reachable from the start state of the protocol. \qed

\end{enumerate}

\end{parawithheader}
 
\end{proof}

\begin{theorem}{[Completeness]}\label{thm4}
If there is a spurious conflict on the set of variables $v'$, where $v'$ = $v$
$\cup$ RestrictSet($v$) and $v$ is the previously instantiated variable set
specified in a query $q$ of protocol $\mathcal{P}$ with ontological conflict,
the algorithm reports it. We do the proof by establishing the contrapositive of
the statement, i.e. Algorithm~\ref{VerifyAllConflict} reports the ontological as
not spurious, if $q$ is reachable from the start state of $\mathcal{P}$.
\end{theorem}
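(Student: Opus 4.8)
The statement to be established is the implication that a reachable conflicting query $q$ forces Algorithm~\ref{VerifyAllConflict} to return \emph{not spurious}, i.e. the top-level call to \emph{VerifyConflict}($v$) returns true. The plan is to prove this by strong induction on $n$, the total number of \emph{VerifyConflict} invocations triggered for $q$, paralleling the soundness argument of Theorem~\ref{thm3}. The key device I would set up first is a correspondence between the operational and the algebraic views: if $q$ is reachable, fix one concrete execution that reaches it; this execution assigns, to every variable occurring on the path from the start state to $q$, a single coherent value, and the resulting global tuple $\bar{w}$ satisfies every condition appearing on that path. I would then carry $\bar{w}$ as an explicit witness and show that it survives each relational-algebra operation performed by the algorithm, so that none of the computed sets can be empty.

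For the base case $n = 1$ we have RestrictSet($v$) $= \phi$ and no recursive calls occur. Reachability of $q$ supplies the witness $\bar{w}$, whose restriction to $v$ is an admissible valuation; hence the relational query $\pi_v(C_1 \Join \cdots \Join C_n)$ evaluated by \emph{GenerateAssignableSet} returns at least this tuple and $\delta \neq \varnothing$. If $v_{split}$ is empty the function returns true immediately. If $v_{split}$ is non-empty, the selection $\sigma_{(c_1 \vee \cdots \vee c_i)}$ applied by \emph{SplitAssignableSet} keeps $\bar{w}$, because $\bar{w}$ was chosen to satisfy exactly the path conditions that constitute the \emph{RelevantConditionSet}; so $\delta$ stays non-empty and \emph{VerifyConflict} returns true.

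For the inductive step I would assume the claim for all computations using at most $n$ calls and treat a query $q$ requiring $n+1$ calls. Every variable in RestrictSet($v$) is instantiated by some earlier query on the reaching path, and such a query is itself reachable (being a prefix of the path to $q$) while its \emph{VerifyConflict} subtree uses strictly fewer than $n+1$ calls; the inductive hypothesis therefore makes each recursive call \emph{VerifyConflict}($u$), over the sets produced by \emph{MakeSets}, return true. Consequently each $u$ deposits a non-empty table in $H^t$, and these tables are collected in $S^t$. I would then invoke \emph{GenerateAssignableSet}, which joins $C_1, \ldots, C_n$ with the tables $t_1, \ldots, t_m$ drawn from $S^t$; since $\bar{w}$ agrees with the values recorded in each $t_j$ on the shared variables, $\bar{w}$ appears in the join, and projecting onto $v$ retains it. The subsequent split selection again preserves $\bar{w}$, giving $\delta \neq \varnothing$ and a return value of true, as required.

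The step I expect to be the main obstacle is the join in the inductive case. Showing that each constraining set individually admits a non-empty valuation is not sufficient; I must guarantee that these valuations are mutually compatible, so that a single assignment simultaneously realises all of them together with the path conditions. This is precisely why the witness is taken to be one concrete reaching execution rather than a collection of independent valuations: that execution fixes one value per variable globally, and the natural join enforces equality exactly on the shared variables. I would therefore state the correspondence lemma with the explicit invariant that $\bar{w}$'s projections agree on every shared variable across all tables in $S^t$, so that non-emptiness propagates through the entire cascade of joins and selections without any set collapsing to $\varnothing$.
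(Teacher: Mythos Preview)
Your proposal is correct and shares with the paper the central device of fixing a concrete witness valuation on $v' = v \cup \mathrm{RestrictSet}(v)$ that certifies reachability of $q$. Beyond that common starting point, however, your argument is substantially more developed than the paper's.

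The paper's proof is a brief sketch: it names the witness valuation $(val_1,\ldots,val_n)$, notes that the algorithm reports \emph{not spurious} precisely when every recursive \emph{VerifyConflict} call deposits a table in $H^t$ and the resulting $\delta$ is non-empty, and concludes that $q$ is reachable via any instantiation in $\delta$. There is no induction, and the witness is not explicitly tracked through the relational-algebra operations; the argument essentially asserts rather than verifies that the recursive calls succeed.

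Your approach differs in two ways. First, you organise the argument by strong induction on the number of \emph{VerifyConflict} invocations, deliberately mirroring the structure of Theorem~\ref{thm3}; the paper does not do this. Second, you carry the witness $\bar{w}$ through every join, projection, and selection, and you isolate the join-compatibility issue (that independent non-emptiness of the constraining tables does not suffice, whereas a single global execution does) as the technical crux. The paper's proof does not engage with this point at all. What your version buys is an actual verification that each intermediate set is non-empty; what the paper's version buys is brevity, at the cost of leaving the key step implicit.
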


\begin{proof}
Suppose $v'$ = $\{\varphi_1,...,\varphi_n\}$. Let the valuations of the
variables in $v'$ are $(val_1,...,val_n)$ when the conflict in $q$ is not
spurious. In this case the conflict may occur in the following way. Consider the
{\em VerifyConflict} function calls made to determine the spuriousness of the
ontological conflict in $q$, among which the first call is done by
Algo.~\ref{VerifyAllConflict} and the subsequent calls are recursive
calls. The conflict is detected as \emph{not} spurious, only if all the
recursive calls to {\em VerifyConflict} add a table to $H^t$ and the set of
tables are kept in $S^t$. Since the conflict is determined as \emph{not}
spurious, the statement $\delta$ is not empty. Therefore in $\mathcal{P}$, $q$
is reachable from the start state of the protocol using any instantiation of
variables belonging to $\delta$. \qed
\end{proof}

\section{Related Works}\label{RelatedWorksSec}
Different aspects of web service interaction have been an active area of
research. However most of these researches consider the interaction at syntactic
level. Foster {\em et. al.} addressed the compatibility verification of web
services in~\cite{DBLP:conf/icws/FosterUMK04}. They adopted a model based
approach for checking the compatibility of web services at different level of
abstraction. However the semantics of exchanged data is not addressed by the
researchers. In~\cite{DBLP:conf/www/Betin-CanBF05} researchers address the
interaction among web services which is asynchronous in nature and propose a
design pattern to help the development of composite web services based on
asynchronous interaction. Zhao {\em et. al.} provides a formal treatment of web
service choreography in~\cite{DBLP:conf/wsfm/ZhaoYQ06}. They define a formal
model of the of WS-CDL and propose a methodology to formally verify the
correctness of a choreography using the model checker SPIN. In
\cite{DBLP:conf/www/BeyerCH05} authors proposed a formalism for specifying the
web service interfaces. They discuss about three kind of constraints which can
be put by a web service interface. The {\em propositional constraints} are
imposed by an interface by specifying the methods that can be invoked by the
clients along with the constraints on the input and output parameters({\em
signature constraints}). \emph{Protocol Constraints} specify the temporal
requirements on the sequence of the method invocations. An algorithm is
proposed to check compatibility among the web services based on the mentioned
constraints. However all the proposed verification strategies work at a
syntactic level, without considering the semantics of the exchanged data. 

On the other hand the current research in semantic web is focused towards the
standardization of the ontology used by the web services with a vision of
computers becoming capable of analyzing all web data. Semantic matchmaking
\cite{DBLP:conf/dexaw/GuoLX05,DBLP:conf/IEEEcit/GuoCL05}
and discovery of semantic web services
\cite{DBLP:conf/widm/PathakKCH05,DBLP:conf/atal/KluschFS06,DBLP:conf/bpm/VuHA05}
are two important research directions in semantic web. The underlying objective
of these approaches is to 
compare facts belonging to different ontologies and to evaluate their
compatibility. Standards like RDF, OWL, WSML etc. are developed for this
purpose.

Ontology plays an important role towards enhancing the integration and
interoperability of the semantic web services. A significant amount of
research has been done towards formalizing the notion of conflict between
two ontologies. In~\cite{Visser_analysis}, authors present a detailed
classification of conflicts by distinguishing between 
\emph{conceptualization} and \emph{explication} mismatches. 
In~\cite{Reconcile3} authors further generalize the notion of conflicts and
classify semantic mismatches into language level mismatches and ontology
level mismatches. Then ontology level mismatches are further classified into 
conceptualization mismatch and explication mismatch. Further research in the
same direction \cite{DBLP:conf/grc/QadirFN07} adds few new types of 
conceptualization mismatches. Researchers in~\cite{DBLP:conf/er/LiL04}
present alternative types of conflicts that are primarily relevant to
OWL based ontologies. However primary focus of these works is towards the
interoperability between two ontologies -- rather than the correctness of the
protocol for information exchange with respect to the interpretation.

Ontology mapping primarily focuses on combining multiple heterogeneous
ontologies. In~\cite{Calvanese01aframework} authors address the problem 
of specifying a mapping between a global and a set of local ontologies.
In~\cite{Madhavan02representingand} authors discuss about establishing a 
mapping between local ontologies. In~\cite{Noy01anchor-prompt} the problem
of ontology alignment and automatic merging is addressed.

Significant amount of research has been done towards the development of the
protocol. In \cite{Paurobally03developingagent} researchers proposed a
methodology for developing protocols in a multi agent environment. They extend
propositional dynamic logic to formally specify the protocol and also use an
extension of state-charts for visual representation. In
\cite{DBLP:conf/prima/OluyomiS04} a step by step procedure is presented for the
development of web service interaction protocols from the problem definition to
the final specification. However these approaches are focused towards the
development of protocol for multi agent environment. The semantics of the
exchanged data is not addressed in these works.

The problem of checking compatibility between two ontologies
with respect to a protocol is new and to the best of our knowledge there is no
prior work on this topic. 

\section{Conclusion}\label{ConclusionSec}
In this paper we addressed the problem of detecting the presence of semantic
mismatch where the data exchange between two ontologies is defined in 
terms of a protocol. We believe that the proposed methodology will be very
helpful for the integration of web services that are developed independently.
Moreover the future of internet applications lie in exchanging knowledge, where
semantic conflict will be a major issue.

{
\footnotesize
\bibliographystyle{splncs}
\bibliography{ref}
\thispagestyle{fancy}
}
\end{document}